\newtheorem{theorem}{Theorem}[section]
\newtheorem{lemma}[theorem]{Lemma}
\newtheorem{example}{Example}[section]
\title{Geometric GAN}
\author{
Jae Hyun Lim$^1$, Jong Chul Ye$^{2,3}$\\
$^1$ ETRI, South Korea\\
\texttt{jaehyun.lim@etri.re.kr} \\
$^2$ Dept. of Bio and Brain engineering, KAIST, South Korea\\
$^3$ Dept. of Mathematical Sciences,  KAIST, South Korea\\
 \texttt{jong.ye@kaist.ac.kr}}
\begin{document}

\maketitle

\begin{abstract}
Generative Adversarial Nets (GANs)  represent an important milestone
for effective generative models, which has inspired numerous variants seemingly different from each other.
One of the main contributions of this paper is to reveal a unified geometric structure in  GAN and its variants.
Specifically, we show that the adversarial generative model training can be decomposed into three geometric steps: separating hyperplane search,  discriminator
parameter update away from the separating hyperplane, and the generator update
 along the normal vector direction of the separating hyperplane. This geometric intuition reveals the limitations of the existing approaches
and leads us to propose a new formulation called {\em geometric GAN}  using SVM separating hyperplane that maximizes the margin.
Our theoretical analysis shows that the geometric GAN converges to a Nash equilibrium between the discriminator and generator.
In addition, extensive numerical results show that the superior performance of geometric GAN. 
\end{abstract}

\section{Introduction}

Recently, inspired by the success of the deep discriminative models,
Goodfellow et al \cite{goodfellow2014generative} proposed a novel generative model training method called 
generative adversarial nets (GAN).
GAN is formulated 
as a minimax game between a generative network (generator) that maps a random vector into the
data space and a discriminative network (discriminator) trying to  distinguish the generated samples from real samples.
Unlike the classical generative models such as Variational Auto-Encoders (VAEs) \cite{kingma2013auto},   the  minimax formation of GAN can transfer the success of deep discriminative models
to generative models, resulting in significant improvement   in generative model performance \cite{goodfellow2014generative}. 

Specifically, the original form of the GAN solves the following minmax game:
\begin{eqnarray} \label{eq:GAN}
\min_G\max_D L_{GAN}(D,G)
\end{eqnarray}
where
\begin{eqnarray}
L_{GAN}(D,G) &:= & E_{x \sim P_S} \left[\log D(x)\right] 
+E_{z\sim P_Z}\left[\log(1-D(G(z)) \right], \label{oGAN}
\end{eqnarray}
where $P_S$ is the sample distribution;
$D(x)$ is the discriminator  that  takes $x \in \Xc$  as input and
outputs a scalar between $[0,1]$; $G(z)$ is the generator that  maps a sample $z$ drawn from a
 distribution  $P_Z$ to the input space $\Xc$.
The meaning of \eqref{eq:GAN} is that the generator tries to fool out the discriminator while the discriminator wants to maximize the differentiation power between the true and generated samples.
The authors further showed that the GAN training is indeed to
approximate the minimization of the symmetric Jensen-Shannon divergence \cite{goodfellow2014generative}.
This idea has been generalized by the authors in \cite{nowozin2016f} for all $f$-divergences.
Moreover, Maximum Mean Discrepancy
objective (MMD) for GAN training was also proposed in  \cite{li2015generative,dziugaite2015training}.

It is well-known that the training GAN is difficult.
In particular,  the authors in \cite{arjovsky2017towards}  have identified the following  sources of the difficulties:
1) when the discriminator becomes accurate, the gradient for generator vanishes,
2) a popular fixation using a generator gradient updating with $E_{z\sim P_Z}\left[- \log D(G(z)) \right]$ is unstable because of the singularity at the denominator when the discriminator is accurate.
The main motivation of  Wasserstein GAN (W-GAN) \cite{arjovsky2017wasserstein} was, therefore, to introduce the weight clipping to address the above-described limitations.  
In fact,  Wasserstein GAN  is a special instance of minimizing the integral probability metric (IPM) \cite{muller1997integral}, and Mroueh et al \cite{mroueh2017mcgan} recently generalized the W-GAN for wider function classes and proposed the mean feature matching and/or covariance feature matching GAN (McGAN)  using the IPM minimization framework \cite{mroueh2017mcgan}.

Inspired by McGAN,  here we propose a novel geometric generalization called {\em geometric GAN.}
Specifically, geometric GAN is  inspired by  our  novel observation that McGAN is composed of  three geometric  operations  in {feature space}:
\begin{itemize}
\item {\bf Separating hyperplane search}: finding the separating hyperplane for a linear classifier  \cite{marron2007distance,carmichael2017geometric,ahn2010maximal}
\item {\bf Discriminator update away from the hyperplane: }  discriminator parameter update {\em away from} the separating hyperplane 
using stochastic gradient direction (SGD).
\item {\bf Generator update toward the hyperplane: } generator parameter update  along the normal vector direction of the separating hyperplane
using stochastic gradient direction (SGD). 
\end{itemize}
This geometric interpretation proves to be very general, so it can be applied to most of the existing GAN and its variants.
Indeed, 
 the main differences between the algorithms come from the construction of the separating hyperplanes for a linear classifier on  feature space and the { geometric scaling factors} for the feature vectors.
Based on this observation,  we provide  new geometric interpretations 
of GAN \cite{goodfellow2014generative}, $f$-GAN \cite{nowozin2016f}, EB-GAN \cite{zhao2017ebgan},  and W-GAN \cite{arjovsky2017wasserstein}  in terms of  separating hyperplanes and geometric scaling factors,  and discuss their limitations.  
Furthermore, we propose a novel {\em geometric
GAN} using the support vector machine (SVM) separating hyperplane that has maximal margin between two classes of separable data \cite{scholkopf2002learning,carmichael2017geometric}.
Our numerical experiments clearly showed that  the proposed geometric GAN  outperforms the existing GANs in all data set.

\section{Related Approaches}\label{sec:theory}

In order to introduce the geometric interpretation of GAN and its variants, we begin with the review
of the mean feature matching GAN (McGAN) \cite{mroueh2017mcgan}.

\subsection{Mean feature matching GAN}

Let $\Fc$ be a set of
bounded real valued functions on the sample space  $\Xc$. 
Suppose that $P$ and $Q$ are two probability distributions on $\Xc$.
Then, the integral probability metric (IPM) between $P$ and $Q$ on the function
space $\Fc$ is defined as follows \cite{muller1997integral,sriperumbudur2010hilbert,sriperumbudur2009integral}:
\begin{eqnarray}\label{eq:IPM}
d_\Fc(P,Q) & =&  \sup_{f\in \Fc} \left| \int f dP - \int f dQ \right| \notag\\
&=&   \sup\limits_{f\in \Fc} \left| E_{x\sim P}[ f(x)] - E_{x \sim Q} [f(x)] \right| 
\end{eqnarray}
where $E_{x\sim P}[\cdot]$ denotes the expectation with respect to the probability distribution $P(x)$.
We can easily show that $d_\Fc$ is non-negative, symmetric and satisfies the triangle
inequality. So  $d_\Fc$ can be used as a distance measure in the probability space.
For example, when $\Fc$ is defined as a collection of functions with a finite Lipschitz constant,
the IPM is the Wasserstein distance or the earth mover's distance \cite{sriperumbudur2010hilbert,sriperumbudur2009integral}
that forms the basis of the Wasserstein GAN \cite{arjovsky2017wasserstein}.

\begin{figure}[!h]
\centering
\includegraphics[width=10cm]{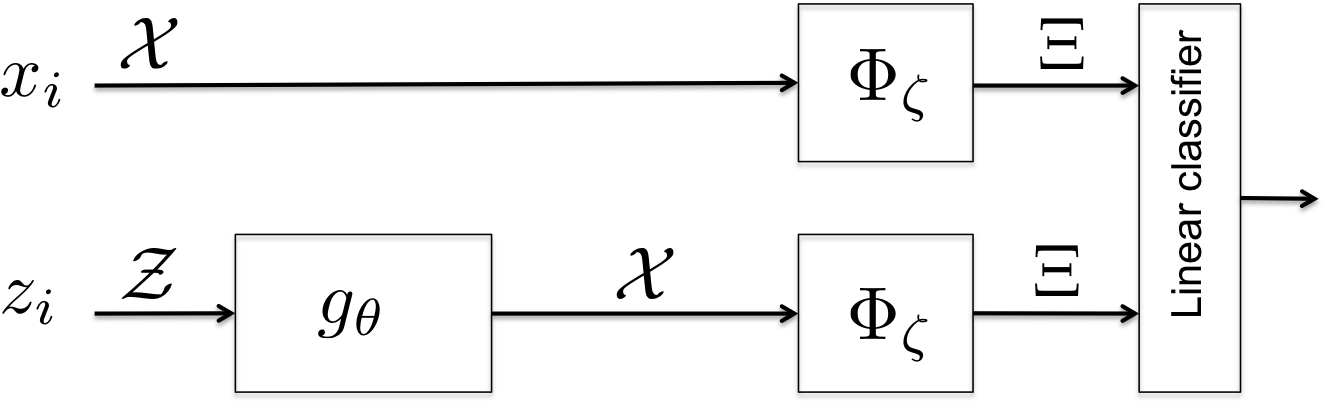}
\caption{Structure of the mean feature matching GAN and its extension to geometric GAN.}
\label{fig:network}
\end{figure}

In McGAN, the  generator network  $g_\theta: \Zc \rightarrow \Xc$  that
maps a random input $z \in \Zc$  to a target $x \in \Xc$  is shown as a block in Fig.~\ref{fig:network}(a), and the
 function space $\Fc$ under study is defined as follows \cite{mroueh2017mcgan}:
$$\Fc_{w,\zeta} = \left\{ f(x) = \langle w, \Phi_\zeta(x) \rangle ~|~ w \in \Xi,  \|w\|_2 \leq 1 \right\}$$
where $ \Phi_\zeta: \Xc \rightarrow \Xi $ is a bounded map  from $\Xc$ to a  (often higher-dimensional) {\em feature space} $\Xi$.
Note that $\Fc_{w,\zeta}$ is  a symmetric function spaces
because if  $f\in \Fc_{w,\zeta}$, then $-f \in \Fc_{w,\zeta}$. Then, the IPM between $P_x$ and $P_g$ is given by
\begin{eqnarray*}
d_\Fc(P_x,P_g) &=&   \max\limits_{\|w\|_2\leq 1} \langle w, E_{x\sim P_x} \Phi_\zeta(x) - E_{g_\theta \sim P_g} \Phi_\zeta(g_\theta) \rangle  
\end{eqnarray*}
under the constraints that $\Phi_\zeta$ is bounded and $\|w\|_2 \leq 1$.

Now, given  a finite sequence of {mini-batch} training data set $S=\{(z_1,x_1),\cdots, (z_n, x_n)\}$, 
 an empirical estimate of the minmax game that minimizes the $d_\Fc(P_x,P_g)$ is given by \cite{mroueh2017mcgan}:
\begin{eqnarray}\label{eq:minmax}
\min_{\theta}\max\limits_{\|w\|_2\leq 1,\zeta} & \hat L(w,\zeta,\theta) \\
\end{eqnarray}
where
$$  \hat L(w,\zeta,\theta) := \left\langle w,  \frac{1}{n} \sum_{i=1}^n \Phi_\zeta(x_i) - \frac{1}{n} \sum_{i=1}^n \Phi_\zeta(g_\theta(z_i)) \right\rangle$$

Then, the  discriminator update can be done using a stochastic gradient descent (SGD) \cite{mroueh2017mcgan}:
\begin{eqnarray}\label{eq:discriminator}
(w, \zeta)  &\leftarrow&  (w, \zeta)  + \eta  \left( \nabla_w  \hat L(w,\zeta,\theta), \nabla_\zeta  \hat L(w,\zeta,\theta) \right) 
\end{eqnarray}
where $\eta$ is a learning rate.
The authors in \cite{mroueh2017mcgan} also used the projection onto the unit $l_p$ ball and  weight clipping for $w$ and $\zeta$ updates, respectively, to meet the constraints.
Using the updated $w$, the generator update is then given by  \cite{mroueh2017mcgan}:
\begin{eqnarray}\label{eq:theta0}
\theta  &\leftarrow&  \theta  + \eta   \sum_{i=1}^n  \left\langle w, \nabla_\theta  \Phi(g_\theta(z_i)) \right\rangle/n \ .
\end{eqnarray}

\subsection{Geometric interpretation of the mean feature matching GAN}

The primal form of the  $w$ update in \eqref{eq:discriminator} is geometrically less informative, so 
here we use the Cauchy-Swartz inequality to obtain a closed-form  update for  $w$: 
\begin{eqnarray} \label{eq:w1}
w^* =  c  \sum_{i=1}^n \left( \Phi_\zeta(x_i) - \Phi_\zeta(g_\theta(z_i))\right)/n
\end{eqnarray}
where the constant  $c$ is given by $c = \|\sum_{i=1}^n \left(\Phi_\zeta(x_i) -\Phi_\zeta(g_\theta(z_i))\right)/n \|^{\frac{1}{2}}$.
Given $w^*$, the corresponding discriminator and generator updates are  represented by
\begin{eqnarray}
\zeta  &\leftarrow&  \zeta  + \eta    \sum_{i=1}^n  \langle w^*,   \nabla_\zeta  \Phi_\zeta(x_i) \ - \nabla_\zeta  \Phi_\zeta(g_\theta(z_i))  \rangle/n \label{eq:sgd1} \\  
\theta  &\leftarrow&  \theta  + \eta   \sum_{i=1}^n  \left\langle w^*, \nabla_\theta  \Phi_\zeta(g_\theta(z_i)) \right\rangle/n \label{eq:sgd2}
\end{eqnarray}
Note that the update equations \eqref{eq:w1} to \eqref{eq:sgd2} are equivalent to  the dual form of the McGAN \cite{mroueh2017mcgan}, where the authors derived
the following minmax problem by using \eqref{eq:w1}:
\begin{eqnarray}\label{eq:mintheta}
\min_\theta \max_{\zeta} \frac{1}{2}  \left\|  \sum_{i=1}^n \Phi_\zeta(x_i)  -\sum_{i=1}^n\Phi_\zeta(g_\theta(z_i)) \right\|^2 \ . 
\end{eqnarray}
However, we notice that the explicit representation by Eqs. \eqref{eq:w1} to \eqref{eq:sgd2}  gives clearer geometric intuition that plays the key role in designing a geometric GAN, 
ass will become clear soon. 

More specifically, in designing a linear classifier for  two class classification problems,
\eqref{eq:w1}  is known as the normal vector for the separating hyperplane for  the {\em mean difference} (MD)  classifier \cite{marron2007distance,carmichael2017geometric,ahn2010maximal}. %
As shown in Fig.~\ref{fig:geometry},
once the separating hyperplane is defined, the SGD udpate  \eqref{eq:sgd1} is to update the discriminator parameters such that
the true and fake samples are maximally separately away from the separating hyperplane parallel to the normal vector.
On the other hand,  the SGD udpate using \eqref{eq:sgd2} is to update the generator parameters   to make
the fake samples approach   the separating hyperplane along the normal vector direction
(see 
Fig.~\ref{fig:geometry}).
\begin{figure}[!h]
\centering
\includegraphics[width=10cm]{./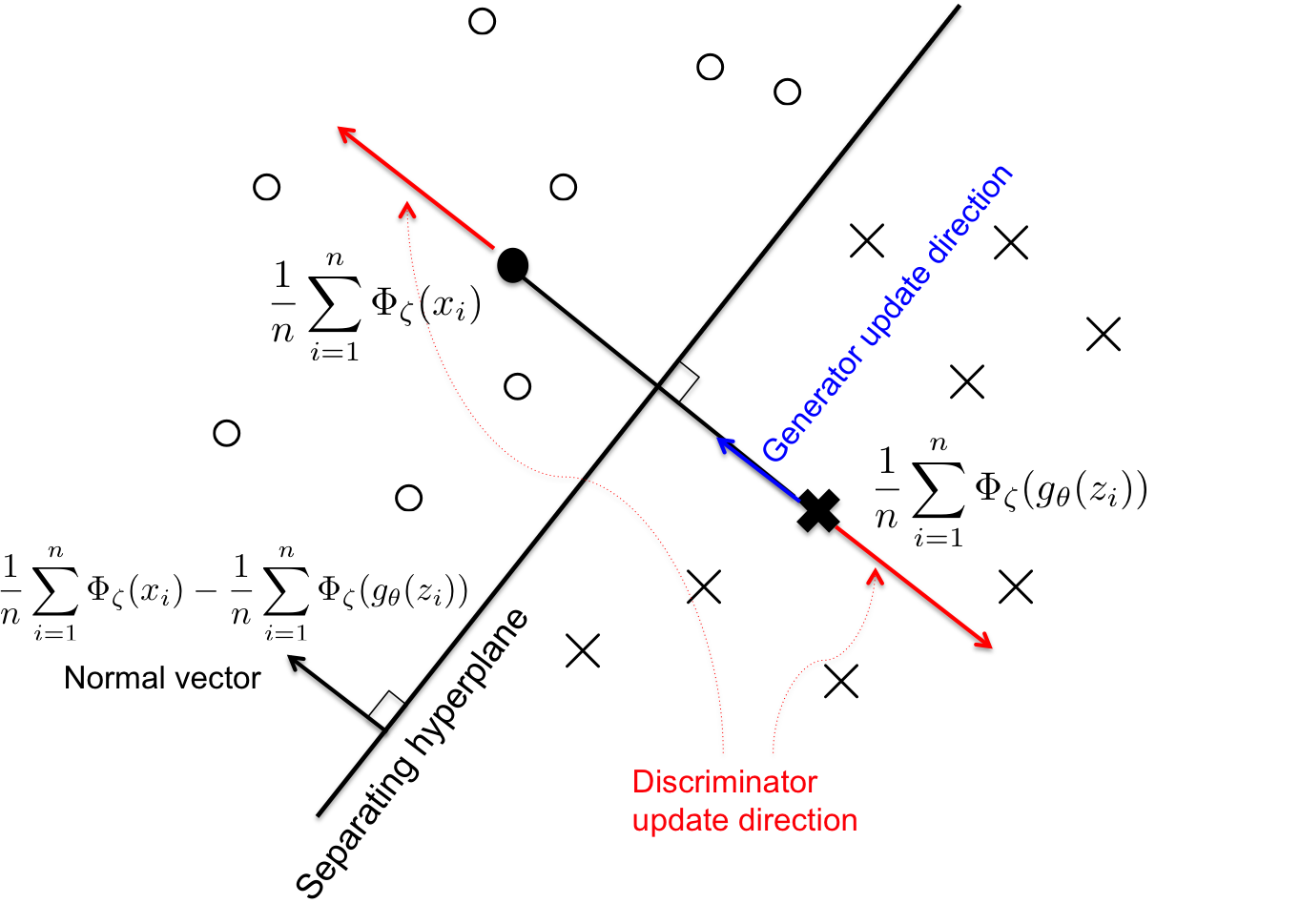}
\caption{Geometry of the mean feature matching GAN.}
\label{fig:geometry}
\end{figure}

Recall that a linear classifier is defined via the normal vector to the separating hyperplane and an offset \cite{scholkopf2002learning}.
Thus, comparing the direction between two classifiers means comparing their normal vector directions.
As shown in Appendix, aside from the geometric scaling factors, 
 the existing GAN and its variants mainly differ in their definition of the
normal vector for the separating hyperplane.
Based on this observation, in the next section,  we discuss an optimal
separating hyperplane for generative model training that has the maximal margin.

\section{Geometric GAN}

\subsection{Linear classifiers in  high-dimension low-sample size}

In adversarial training in feature space, a discriminator is interested in discriminating true samples $\{\Phi_\zeta(x_i)\}_{i=1}^n$ and the fake samples
 $\{\Phi_\zeta(g_\theta(z_i))\}_{i=1}^n$. In practice, the minibatch size $n$ is much smaller than the dimension of the feature space $d$,
 and this type of classification problem is often called the high-dimension low-sample size (HDLSS) problem \cite{marron2007distance,carmichael2017geometric,ahn2010maximal}.
 
In fact, the mean difference (MD) classifier is one of the popular methods for HDLSS.
Specifically, the MD classifier selects the hyperplane that lies half way between the two class means. In
particular the normal vector $w$ for the seperating hyperplane is given by the difference of the class means:
\begin{eqnarray}
w^{MD} &=& \frac{1}{n} \sum_{i=1}^n \Phi_\zeta(x_i) - \frac{1}{n} \sum_{i=1}^n \Phi_\zeta(g_\theta(z_i)) 
\end{eqnarray}
Note that if the variables are first mean centered then scaled by the standard deviation, 
then the mean difference is equivalent to the naive Bayes classifier \cite{marron2007distance,carmichael2017geometric,ahn2010maximal}.
Furthermore, in HDLSS, there always exists a {maximal data piling direction} (MDP) \cite{ahn2010maximal}, where mulitple points in each class have the identical projection on the line spanned by the normal vector.

On the other hand, the Support Vector Machine (SVM) \cite{scholkopf2002learning} and its many variants is one of the most widely used and well studied 
classification algorithms  and its robustness has been also  proven for HDLSS setup  \cite{marron2007distance,carmichael2017geometric,ahn2010maximal}. 
Although the aforementioned classification algorithms are motivated by  fitting a {\em statistical} distribution to the
data,  SVM is motivated by a {\em geometric} heuristic that
leads directly to an optimization problem: maximize the margin between two classes of
separable data. In addition, soft-margin SVM balances two competing objectives to maximize the margin
while penalizing points on the wrong side of the margin.

Recently,  Carmichael et al \cite{carmichael2017geometric} investigated the Karush-Kuhn-Tucker conditions to provide
rigorous mathematical proof for new insights into the behaviour of soft-margin  SVM in the large
and small tuning parameter regimes in HDLSS.  They revealed that for small tuning parameter, if the number of data in two classes are the same (which is the case in our problem),
then the SVM direction becomes exactly the MD direction.
In addition, for sufficiently
large tuning parameter,  the authors showed that soft margin SVM is equivalent to hard margin
SVM if the data are separable, and the hard-margin SVM has  data piling.
Due to this generality of the soft-margin SVM, the proposed geometric GAN is designed based on soft-margin SVM linear classifier.

\subsection{Geometric GAN with  SVM hyperplane}

 Note that soft-margin SVM  is designed by adding a tunning parameter $C$ and slack variables $\xi_i$ which allows points to be on the wrong
side of the margin \cite{scholkopf2002learning}.
 In our problem classifying the true samples versus fake samples,  the primal form of soft-margin SVM can be formulated by
\begin{eqnarray*}
\min_{w,b}  & \frac{1}{2} \|w\|^2 + C \sum_i (\xi_i + \xi_i')
&\\
\mbox{subject to } & \langle w,  \Phi_\zeta(x_i) \rangle+b \geq 1-\xi_i, &  i=1,\cdots, n\\
&  \langle w,  \Phi_\zeta(g_\theta(z_i)) \rangle+b  \leq \xi'_i -1 , &  i=1,\cdots, n\\
& \xi_ i, \xi'_i \geq 0 ,  \quad i=1,\cdots, n
\end{eqnarray*}
Equivalently, the primal form of the soft-margin SVM can be represented using loss + penalty form \cite{hastie2004entire,scholkopf2002learning}:
\begin{eqnarray}\label{eq:optSVM}
\min_{w,b} R_{\theta}(w,b;\zeta)
\end{eqnarray}
where 
\begin{eqnarray}
R_{\theta}(w,b;\zeta) &=&  \frac{1}{2Cn} \|w\|^2  + \frac{1}{n} \sum_{i=1}^n \max\left(0, 1-  \langle w,  \Phi_\zeta(x_i) \rangle- b\right) \notag\\
&& + \frac{1}{n}\sum_{i=1}^n \max\left(0,1+  \langle w,  \Phi_\zeta(g_\theta(z_i)) \rangle + b\right)  \label{eq:SVMcost}
\end{eqnarray}
The goal of the SVM optimization \eqref{eq:optSVM} is to maximize the margin between the two classes.
This implies that the discriminator update can be also easily incorporated with SVM update, because the goal of the discriminator update can be also regarded to
maximize the margin between the two classes. More specifically,  our optimization problem is given by
\begin{eqnarray}\label{eq:optSVM2}
\min_{w,b,\zeta} R_{\theta}(w,b;\zeta)
\end{eqnarray}
for a given generator parameter $\theta$.

%

Specifically, in SVM,  the normal vector for the optimal separating hyperplane $w^{SVM}$ from \eqref{eq:optSVM} is given by \cite{carmichael2017geometric,scholkopf2002learning}:
\begin{eqnarray}\label{eq:wSVM}
w^{SVM}:= \sum_{i=1}^n \alpha_i \Phi_\zeta(x_i) -  \sum_{i=1}^n  \beta_i \Phi_\zeta(g_\theta(z_i)) 
\end{eqnarray}
where  $(\alpha_i,\beta_i)$ 
will be nonzero only for the support vectors, where the set of support vectors now includes all
data points on the margin boundary as well as those on the wrong side of the margin boundary (see Fig.~\ref{fig:geometrySVM}).
More specifically, we define the region $\Mc$ between the margin boudaries as shown in Fig.~\ref{fig:geometrySVM}(a):
\begin{eqnarray}
\Mc &=& \{ \phi \in \Xi  ~|~   |\langle w^{SVM},  \phi \rangle + b |\leq 1 \} \ .
\end{eqnarray}
%
%
Then, for  given $w^{SVM}$ and $b$,  the cost function \eqref{eq:SVMcost} then becomes
\begin{eqnarray}
R_{\theta}(w,b;\zeta)  &:=&  \frac{1}{n}\sum_{i\in I_S}   \langle w^{SVM},   \Phi_\zeta(g_\theta(z_i)) \rangle -  \frac{1}{n} \sum_{i\in I_T}   \langle w^{SVM},    \Phi_\zeta(x_i) \rangle  + \mbox{constant} \notag\\
&=& \frac{1}{n} \sum_{i=1}^n  \langle w^{SVM},   s_i\Phi_\zeta(g_\theta(z_i))- t_i \Phi_\zeta(x_i) \rangle + \mbox{constant}
\end{eqnarray}
where $(t_i,s_i)$ are {\em geometric scaling factors}  defined by
\begin{eqnarray}
t_i = \begin{cases} 1, &    \Phi_\zeta(x_i) \in \Mc   \\ 0, & \mbox{otherwise} \end{cases}, &  s_i = \begin{cases} 1, & \Phi_\zeta(g_\theta(z_i)) \in \Mc \\ 0, & \mbox{otherwise} \end{cases}
\end{eqnarray}
This is because the SVM cost function value is not dependent on the feature vectors outside of the margin boundaries
and is now fully determined by the supporting vectors in $\Mc$.
%

Accordingly,  the discriminator update is given by following SGD updates:
\begin{eqnarray}
\zeta &\leftarrow& \zeta + \eta  \sum_{i=1}^n \langle w^{SVM},  t_i \nabla_\zeta \Phi_\zeta(x_i) - s_i\nabla_\zeta\Phi_\zeta(g_\theta(z_i))/n  \rangle  \label{eq:zetasvm} 
\end{eqnarray}
In another word,  to  update the discriminator  parameters,  we only need  to push out the supporting vectors toward  the margin boundaries.

On the other hand,  generator update requires more geometric intuition. As shown in Fig.~\ref{fig:geometrySVM}, the generator
update tries to move the fake feature vectors toward the normal vector direction of the separating hyperplane  so that they can be classified as the true feature vectors.
This means that the generator update should be given by the following minimization problem:
$$\min_\theta L_{w,b,\zeta}(\theta)$$
where
\begin{eqnarray}\label{eq:GENcost}
L_{w,b,\zeta}(\theta) &:=&  -\frac{1}{n}\sum_{i=1}^n   D_{w,b,\zeta}(g_\theta(z_i)) \quad  
\end{eqnarray}
with the  linear classifier
\begin{eqnarray}\label{eq:LinD}
 D_{w,b,\zeta}(x):=  \langle w,  \Phi_\zeta(x) \rangle +b    \  .
\end{eqnarray}
This is because  $-D_{w,b,\zeta}(x)$ gets smaller as $  \Phi_\zeta(g_\theta(z))$ moves toward the upper-left side in  Fig.~\ref{fig:geometrySVM} along the normal vector direction $w^{SVM}$.
This results in the following SGD updates:
\begin{eqnarray}
\theta &\leftarrow& \theta + \eta   \sum_{i=1}^n \langle w^{SVM},   \nabla_\theta\Phi_\zeta(g_\theta(z_i))  \rangle/n   \label{eq:thetasvm} \  .
\end{eqnarray}

Note that the SGD updates \eqref{eq:zetasvm} and \eqref{eq:thetasvm} are strikingly similar to \eqref{eq:sgd1} and \eqref{eq:sgd2} of McGAN.
Aside from the different choice of separating hyperplane by \eqref{eq:wSVM}, the discriminator update  \eqref{eq:zetasvm} has additional
geometric scaling factors $(t_i,s_i)$.
As will be shown in Appendix~\ref{ap:variant}, the appearance of the geometric scaling factors is a recurrent theme in geometric interpretation of GAN and its variants, which we believe is fundamental to account for the
geometry of the classifiers.

\begin{figure}[!h]
\centering
\includegraphics[width=10cm]{./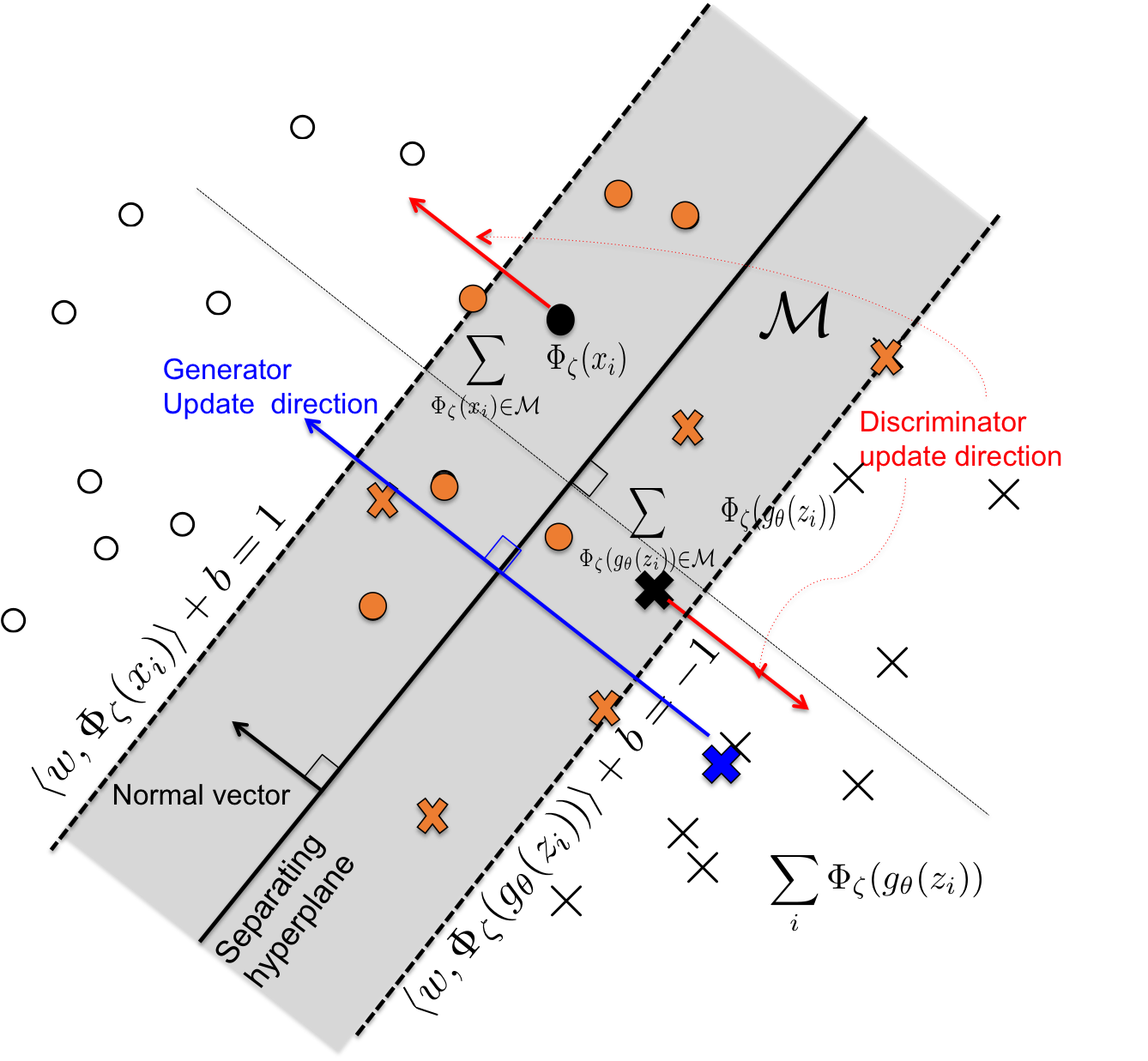} 
\caption{Geometric GAN using SVM hyperplane.  Discriminator and  generator update directions are shown.}
\label{fig:geometrySVM}
\end{figure}

\subsection{Convergence of Geometric GAN}

In order to show the convergence of the geometric GAN to a Nash equilibrium,  we investigate the behaviour at large sample limit.
Specifically, as $n\rightarrow \infty$ for a fixed $C$, the soft margin SVM cost in \eqref{eq:SVMcost} becomes
\begin{eqnarray*}
R(D,g) &
=  & 
 E_{x\sim P_x} \left[ \max\left(0, 1-  D_{w,b,\zeta}(x) \right) \right] 
 + E_{z\sim P_z} \left[ \max\left(0,1+ D_{w,b,\zeta}(g_\theta(z)) \right) \right]  \notag\\
 &=&    \int {\big[ 1 - D_{w,b,\zeta}(x) \big]_{+} dP_x} + \int { \big[ 1 + D_{w,b,\zeta}(g_\theta(z)) \big]_{+} dP_z}\\
&=& \int { p_x(x) \big[ 1 - D_{w,b,\zeta}(x) \big]_{+} dx} + \int {p_{g_\theta}(x) \big[ 1 + D_{w,b,\zeta}(x) \big]_{+} dx}
\end{eqnarray*}
where  $\left[x\right]_{+} = \max\{0, x\}$ and $D_{w,b,\zeta}(x)$ is a linear discriminator in \eqref{eq:LinD} parameterized by $(w,b,\zeta)$.
 Here, $p_x(x)$ and $p_{g_\theta}(x)$ denote the probability density functions (pdf) for the distribution $P_x$
and $P_z(g_\theta(z))$, respectively.
Similarly, the generator cost function in  \eqref{eq:GENcost} becomes
\begin{eqnarray*}
L(D,g) 
&= &- E_{z\sim P_z} \left[ D_{w,b,\zeta}(g_\theta(z))\right] \notag\\
&=&-\int p_{g_\theta}(x) D_{w,b,\zeta}(x) dx
\end{eqnarray*}
Then, the adversarial training between discriminator and generator can be achieved by the following alternating minimization:
\begin{eqnarray}
\min_{D}R(D,g)  &:=& \min_{w,b,\zeta}R(D_{w,b,\zeta},g) \label{eq:nashD} \\
\min_g L(D,g) &:=& \min_\theta L(D, g_\theta) \label{eq:nashG}
\end{eqnarray}

Suppose that  the optimal solution of the aforementioned adversarial training is a pair \( (D^*, g^*) \). Then, we can prove the following key convergence result.

\begin{theorem}\label{thm:conv0}
Suppose that  $(D^*, g^*)$ is a minimizer of the alternating minimization of \eqref{eq:nashD} and \eqref{eq:nashG}.  Then,  $p_{g*}(x) = p_{x}(x)$ almost everywhere, and 
$R(D^*, G^*) =2.$ 
\end{theorem}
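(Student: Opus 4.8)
The plan is to mirror the classical GAN optimal-discriminator argument, adapted to the hinge (SVM) loss, and then pin down the equilibrium value using \emph{both} best-response conditions. Concretely, I would establish two bounds on $R(D^*,g^*)$: an upper bound $R(D^*,g^*)\le 2$ with equality exactly when $p_{g^*}=p_x$, obtained from the closed form of the discriminator's best response; and a matching lower bound $R(D^*,g^*)\ge 2$, obtained from the optimality of $g^*$ against the generator that reproduces the data distribution. Sandwiching the two gives $R(D^*,g^*)=2$, and the equality case of the upper bound then yields $p_{g^*}=p_x$ almost everywhere.

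For the upper bound, I would first minimize $R(D,g)$ over $D$ for a \emph{fixed} $g$ by pointwise minimization of the integrand $p_x(x)\,[1-D(x)]_{+}+p_{g}(x)\,[1+D(x)]_{+}$ in the scalar $t=D(x)$. A short case analysis on the three regimes $t\ge 1$, $-1\le t\le 1$, and $t\le -1$ (on each of which the map is affine) shows the minimum equals $2\min\{p_x(x),p_{g}(x)\}$, attained at $t=\mathrm{sign}(p_x(x)-p_{g}(x))$ and at any $t\in[-1,1]$ on the indifference set $\{p_x=p_{g}\}$. Hence $\min_D R(D,g)=2\int \min\{p_x(x),p_{g}(x)\}\,dx$, so that $R(D^*,g^*)=2\int\min\{p_x,p_{g^*}\}\,dx\le 2\int p_x\,dx=2$; since $\int p_x=\int p_{g^*}=1$, equality holds iff $(p_x-p_{g^*})_{+}=0$ a.e., which together with the equal masses forces $p_{g^*}=p_x$ a.e.

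For the lower bound I would use $[y]_{+}\ge y$ to write $R(D^*,g^*)\ge \int p_x(1-D^*)\,dx+\int p_{g^*}(1+D^*)\,dx = 2+\int (p_{g^*}-p_x)\,D^*\,dx$. Since $g^*$ minimizes $L(D^*,\cdot)$ and, by the usual capacity assumption, the generator family contains (or can approximate arbitrarily well) a $g$ with $p_{g}=p_x$, we get $-\int p_{g^*}D^*\,dx=L(D^*,g^*)\le L(D^*,g)=-\int p_x D^*\,dx$, i.e.\ $\int(p_{g^*}-p_x)D^*\,dx\ge 0$; hence $R(D^*,g^*)\ge 2$. Combining with the previous paragraph, $R(D^*,g^*)=2$ and $p_{g^*}=p_x$ a.e.

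The main obstacle is not any single computation but making the two idealizations precise: the upper-bound step needs the discriminator class to realize the pointwise-optimal $D^*$ (otherwise one only gets an inequality), and the lower-bound step needs $p_x$ to lie in, or be approximable by, the generator class. Both are the standard infinite-capacity assumptions used in the original GAN analysis and should be stated as hypotheses. A secondary point requiring care is the indifference set $\{p_x=p_{g^*}\}$, on which $D^*$ is non-unique; the sandwich argument above is insensitive to this choice, but an alternative route---arguing directly that the generator's best response concentrates all mass on $\arg\max_x D^*(x)$ and then deriving a contradiction with the unit-mass constraint whenever $\{p_x>p_{g^*}\}$ has positive measure---does depend on it and is correspondingly more delicate.
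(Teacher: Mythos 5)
Your proposal is correct and follows essentially the same route as the paper's proof: the pointwise case analysis yielding $2\min\{p_x(x),p_{g}(x)\}$ is exactly the content of the paper's Lemma on $\alpha[m-y]_++\beta[m+y]_+$ (giving the upper bound $R(D^*,g^*)\le 2$ with equality forcing $p_{g^*}=p_x$ a.e.), and your lower bound via $[y]_+\ge y$ together with generator optimality against a data-reproducing generator is the same argument the paper runs through its Lemma on $(m-y)+[m+y]_+\ge 2m$. Your explicit flagging of the two capacity assumptions (discriminator realizing the pointwise optimum, generator class containing $p_x$) is a point the paper leaves implicit, but the underlying argument is identical.
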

\begin{proof}
See Appendix~B.
\end{proof}


In the following example,  we provide a specific example where the discriminator and
generator cost function has close form expressions, which also has intuitive meaning of the minimum value 2 in Theorem~\ref{thm:conv0}. In particular,
we consider an example of learning parallel lines as in the original Wasserstein GAN paper \cite{arjovsky2017wasserstein}.

\begin{example}[Learning parallel lines]
Let  \(u\sim U[0, 1]\) denote  the uniform distribution on the unit interval.
Let $P_{x}$ be the distribution of \(x= (0, u) \in \mathbb{R}^{2}\), uniform  on a straight vertical line passing through the origin.
Suppose that the generator sample is given by \(g_\theta(z)=(\theta,z)\) with $\theta$ a single  real parameter.
We can easily see that the SVM separating hyperplane is given by
$$ \langle w, x \rangle +b=0$$
where
$w=(-1, 0), b=\theta/2$ if $\theta \geq 0$ and 
$w=(1, 0), b=-\theta/2$ if  $\theta < 0$.
Thus, the generator cost function becomes
\begin{eqnarray*}
L(D,g) &=&  - E\left[\langle w, g_\theta(z) \rangle +b\right] \\
&=& |\theta|/2
\end{eqnarray*}
which achieves its minimum at $\theta^*=0$.
Then, the corresponding  discriminator cost value at  $n\rightarrow \infty$ is given by
\begin{eqnarray*}
R(D^*,g^*)  &=& \lim_{\theta \rightarrow 0} \left\{E \left[1- \langle w, x \rangle -b\right]_+ +  E \left[1+ \langle w, g_\theta(z) \rangle +b\right]_+ \right\} \\
&=& \lim_{\theta \rightarrow 0} 2 \left[1-|\theta|/2\right]_+ \\
&=&  2  
\end{eqnarray*}
which coincides the results by Theorem~\ref{thm:conv0}.
\end{example}

In this example, $R(D^*,g^*)=2$ because all the true and fake samples lies on the separating hyperplane.
This  informs that at the Nash equilibrium of this problem, all the true samples and the fake samples are not separable, which is the desired property of GAN training.
However, Theorem~\ref{thm:conv0} is only a necessary condition to make the true and fakes sample non-separable.
The proof for  the sufficiency condition would be very interesting, which is beyond the scope of current paper. 

\section{Experimental Results}
\subsection{Mixture of Gaussians}
 
In order to evaluate the proposed geometric GAN, we  perform comparative studies with three representative types of GANs; 1) Jenson-Shannon (GAN) \cite{goodfellow2014generative}, 2) mean difference in \(l_\infty\) (Wasserstein GAN) \cite{arjovsky2017wasserstein}, and 3) mean difference in \(l_2\) \cite{mroueh2017mcgan}. 
Here, the behavior of the maximum margin separating hyperplane of the geometric GAN is empirically analyzed against those of the aforementioned approaches. 

In addition, to evaluate the dependency of each variants on 
Lipschitz continuity constraints, 
the Lipschitz constraints suggested in  \cite{arjovsky2017wasserstein, qi2017lsgan} was also applied to each adversarial training approach.
More specifically, the parameters \( (w, b) \)  of the final linear layer in discriminator  is determined to represent the aforementioned hyperplane properties, whereas the Lipschitz constraints are applied for other network parameters, such as \(\zeta\) in \(\Phi_{\zeta}(x)\) and \(\theta\) in \(g_{\theta}(z)\).
In this paper, we only consider Lipschitz density constraints  in \cite{qi2017lsgan}, so we follow to use weight decay on generators \( g_{\theta}(x) \) and feature space mapping \( \phi_{\zeta}(x) \) in discriminators.

We test the four hyperplane searching approaches for discriminators, as well as their complementary generator losses, on two dimensional synthetic data.
The synthetic data consists of 100K data points generated from a mixture of 25 Gaussians, akin to the data that have been used  for describing mode collapsing behaviors of GANs \cite{dumoulin2016ali, metz2016unrolled, che2016moderegulgan}. 
Specifically, the means of the Gaussians are evenly spaced as a 5 by 5 grid along \(x\) and \(y\) axis from -21 to 21.
The standard deviation of each normal distribution is 0.316 (so that the variance would be 0.1).
The sampled data from the true distribution can be seen in Figure \ref{figure:exp1a} and \ref{figure:exp1b}.

For discriminator and generator, a multi-layered fully-connected neural network architecture is used, as described below.
RMSprop \cite{hinton2015rmsprop} is used to train these networks, except vanilla GAN (without any Lipschitz constraints).
For vanilla GAN, Adam \cite{kingma2014adam} with momentum \(\beta_{1}=0.5\) is used.
Base learning rate is set to 0.001.
When weight clipping is applied, parameters in feature mapping \(\phi_{\zeta}(x)\) is clipped within the range of \([-0.01, 0.01]\).
When weight projection on unit \(l_2\) norm is applied, the following rule, \(p=\min\{1, 1/{\Vert p \Vert_{2}}\}\times p \) described in \cite{mroueh2017mcgan} is used to update any parameter \(p\) for every iteration.
For weight decay, weight decaying parameter is set to 0.001.
Batch size is set to 500 for all experiment.
For the number of discriminator update \(K_d\) and the one of generator update \(K_g\), we set them as 1, i.e. \((K_d = 1,  K_g=1)\).

\begin{itemize}
\item {\bf Discriminator:} \(\mathtt{FC(2,128)}\)-\(\mathtt{ReLU}\)-\(\mathtt{FC(128,128)}\)-\(\mathtt{ReLU}\)-\(\mathtt{FC(128,128)}\)-\(\mathtt{ReLU}\)-\(\mathtt{FC(128,1)}\)
\item {\bf Generator:} \(\mathtt{FC(4,128)}\)-\(\mathtt{BN}\)-\(\mathtt{ReLU}\)-\(\mathtt{FC(128,128)}\)-\(\mathtt{BN}\)-\(\mathtt{ReLU}\)-\(\mathtt{FC(128,128)}\)-\(\mathtt{BN}\)-\(\mathtt{ReLU}\)-\(\mathtt{FC(128,2)}\)
\end{itemize}

 The results of the experiment with the mixture of 25 Gaussians are illustrated in Figure \ref{figure:exp1a} and \ref{figure:exp1b}. 
 Amongst all GAN variants in this experiment, geometric GAN demonstrated the least mode collapsing behavior independently with Lipschitz continuity regularization constraints. 
 
 As shown in Fig.~\ref{figure:exp1b}, under the same Lipschitz density constraints, linear hyperplane approaches demonstrated less mode collapsing behaviors by virtue of consistent gradients unlike nonlinear separating hyperplane of original GAN.
 However, mean difference-driven hyperplanes in Wasserstein GAN or McGAN led generators to the mean of arbitrary number of modes in true distributions since the characteristics of mean difference.
 One the other hand, geometric GAN generally showed robust and consistent convergence behavior towards true distributions.   
 

\begin{figure}
\centering
\begin{minipage}[t][2.45cm]{2.7cm}
  \subfloat[True data]{\includegraphics[width=2.4cm,height=1.8cm]{./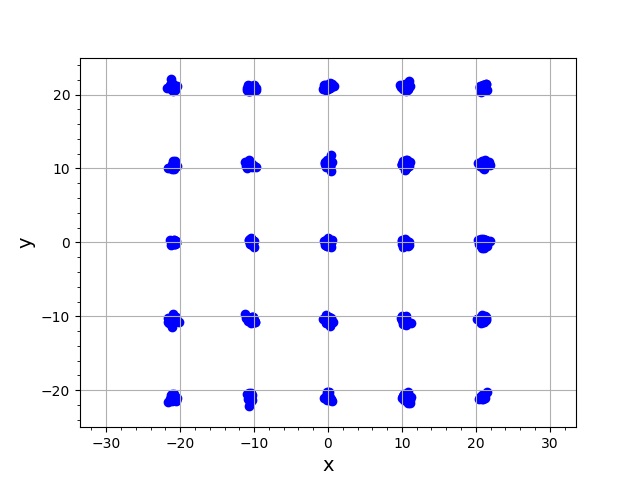}}
\end{minipage}%
\begin{minipage}[t][2.45cm]{2.7cm}
  \subfloat[GAN \cite{goodfellow2014generative}]{\includegraphics[width=2.4cm,height=1.8cm]{./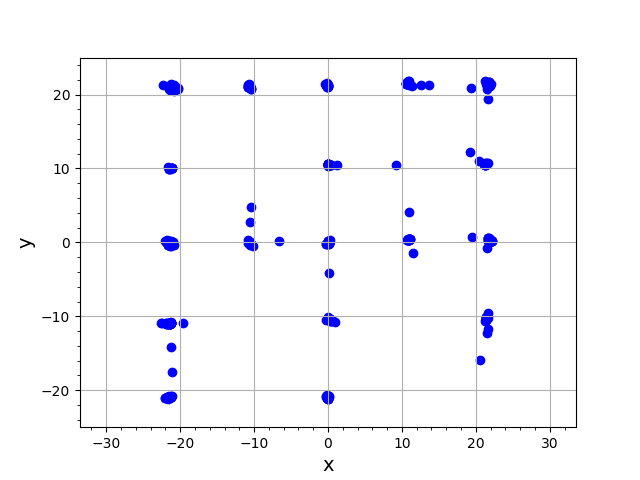}}
\end{minipage}%
\begin{minipage}[t][2.45cm]{2.7cm}
  \subfloat[Proposed]{\includegraphics[width=2.4cm,height=1.8cm]{./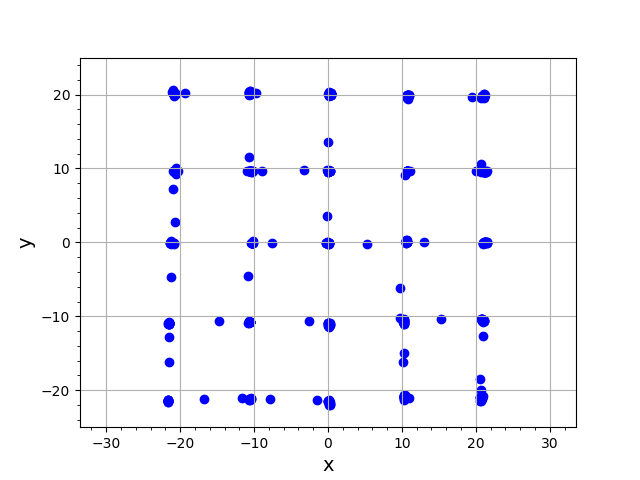}}\end{minipage}%
\begin{minipage}[t][2.45cm]{2.7cm}
  \subfloat[WGAN \cite{arjovsky2017wasserstein}]{\includegraphics[width=2.4cm,height=1.8cm]{./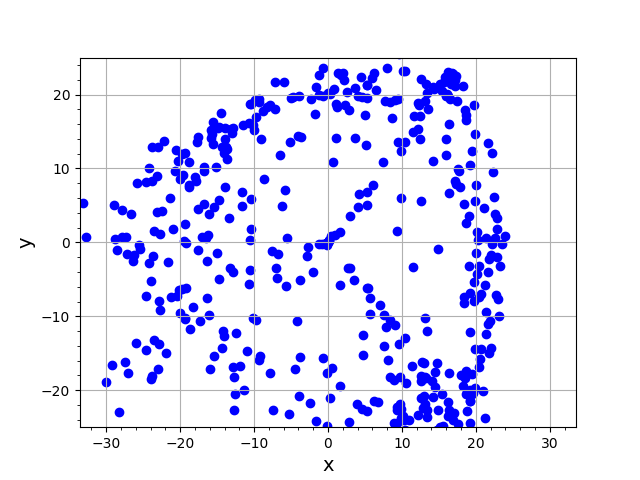}}
\end{minipage}%
\begin{minipage}[t][2.45cm]{2.7cm}
  \subfloat[meanGAN + wproj on \( \phi_{\zeta} \) \cite{mroueh2017mcgan}]{\includegraphics[width=2.4cm,height=1.8cm]{./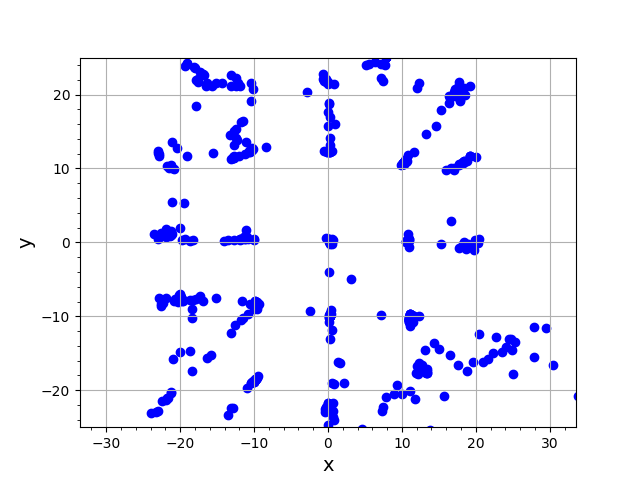}}
\end{minipage}
\caption{Generated samples of GAN variants for the mixture of 25 Gaussians, and the ones from the true data distribution.}
\label{figure:exp1a}
\end{figure}

\begin{figure}
\centering
\begin{minipage}[t][2.75cm]{2.7cm}
  \subfloat[True data]{\includegraphics[width=2.4cm,height=1.8cm]{./real_samples_toy4_gan_toy4_adam_1.png}}
\end{minipage}%
\begin{minipage}[t][2.75cm]{2.7cm}
  \subfloat[GAN + wdecay on \( \Phi_{\zeta} \) and \(g_\theta\)]{\includegraphics[width=2.4cm,height=1.8cm]{./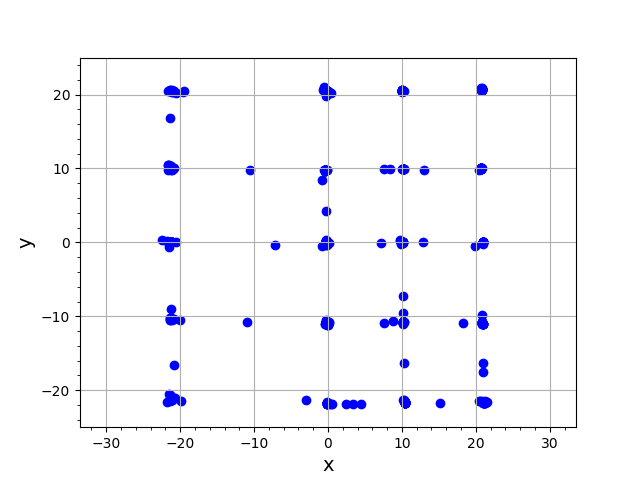}}
\end{minipage}%
\begin{minipage}[t][2.75cm]{2.7cm}
  \subfloat[Proposed + wdecay on \( \Phi_{\zeta} \) and \(g_\theta\)]{\includegraphics[width=2.4cm,height=1.8cm]{./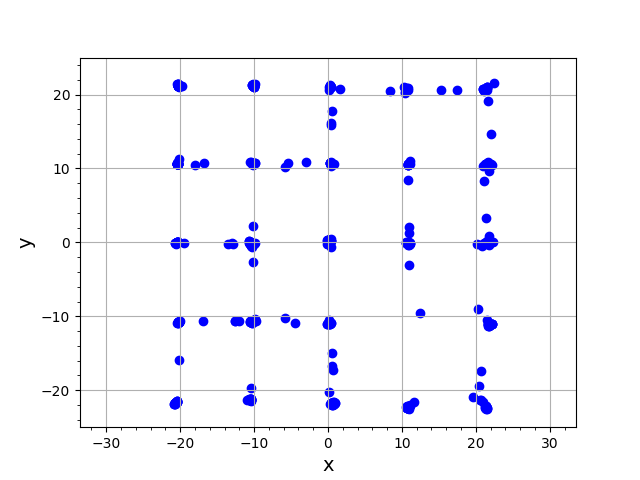}}
\end{minipage}%
\begin{minipage}[t][2.75cm]{2.7cm}
  \subfloat[WGAN + wdecay on \( \Phi_{\zeta} \) and \(g_\theta\)]{\includegraphics[width=2.4cm,height=1.8cm]{./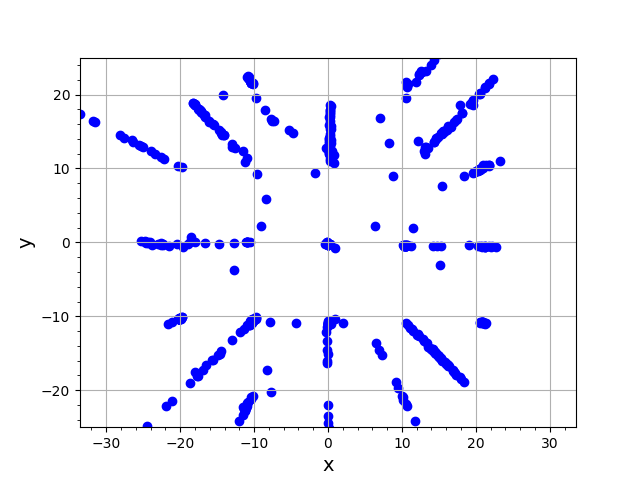}}
\end{minipage}%
\begin{minipage}[t][2.75cm]{2.7cm}
  \subfloat[meanGAN + wdecay on \( \Phi_{\zeta} \) and \(g_\theta\)]{\includegraphics[width=2.4cm,height=1.8cm]{./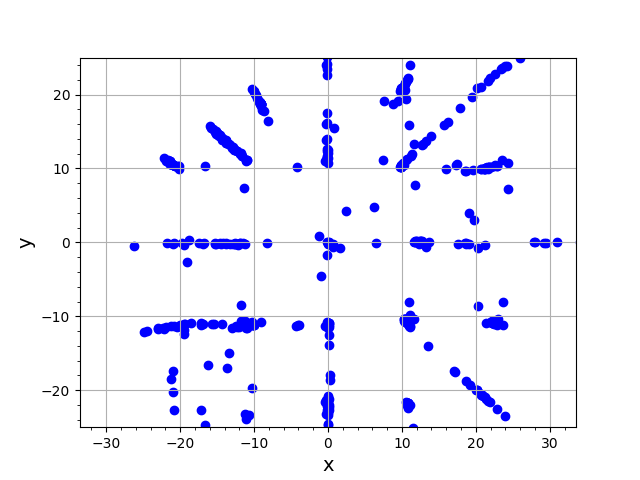}}
\end{minipage}
\caption{Generated samples of GAN variants under trained with Lipschitz density constraints suggested in \cite{qi2017lsgan} for the mixture of 25 Gaussians, and the sample from the true data distribution. During training, weight decay was applied on \(\zeta\) in \(\Phi_{\zeta}(x)\) and \(\theta\) in \(g_{\theta}(z)\) in addition to their own hyperplane constraints.}
\label{figure:exp1b}
\end{figure}


\subsection{Image Datasets}

In order to analyze the proposed method on large-scale dataset, the geometric GAN is empirically analyzed on well-studied datasets in the context of adversarial training; MNIST, CelebA, and LSUN datasets. 
Since consistent quantitative measures are still under debate, we only perform qualitative comparisons of generated samples from the learned generators of the propsed method against the results of previous literatures.
In favor of fair comparisons with other adversarial training methods, we adopt the settings from the previous literatures except the hyperparameters of stochastic optimizations and the tuning parameter of the proposed method. 

The DCGAN neural network architectures \cite{radford2015dcgan} was used, including batch normalization for generator.
Note that the currently known adversarial training methods that demonstrated stable learning without batch normalization \cite{arjovsky2017wasserstein, mroueh2017mcgan, qi2017lsgan} have resorted to Lipschitz constraints; therefore, it can also be applied to other adversarial training criterions, including geometric GAN, in order to train batch normalization-free generators.

Each pixel value in input image was rescaled to \([-1, 1]\) for all dataset, including MNIST dataset.
During all training, mini-batch size was set to 64. 
For stochastic gradient update during training, RMSprop was used \cite{hinton2015rmsprop}.
The number of generator's updates per each discriminator's update is set to 10 \((K_d = 1, K_g = 10)\). 
Learning rate is set to 0.0002 for both discriminators and generators, and a tuning parameter \(C\) for discriminator is set to 1.

Specifically for MNIST dataset, input images were resized to 64 by 64 pixels in order to use the same DCGAN network architecture, and 
the number of epochs for training was set to 20. For CelebA dataset, input images were resized to 96 by 96 pixels and center-cropped with 64 by 64 pixels, and the number of epochs for training is set to 50. For LSUN dataset, only bedroom dataset is used, and an input image is resized to 64 by 64 pixels. The number of epochs for training is set to 2 for LSUN dataset.

The results in Figure ~\ref{figure:exp2:mnist}, \ref{figure:exp3:celeba}, and \ref{figure:exp4:lsun} clearly show that the geometric GAN
generates very realistic images without mode collapsing or divergent behaviours.

%
%
%
%
%
%

\begin{figure}[!hbt]
\centering
\includegraphics[width=10cm]{./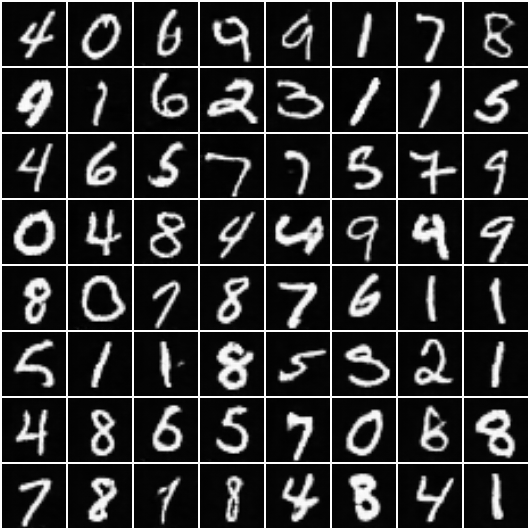}
\caption{Generated samples of Geometric GAN trained for MNIST dataset.}
\label{figure:exp2:mnist}
\end{figure}

\begin{figure}[!hbt]
\centering
\includegraphics[width=10cm]{./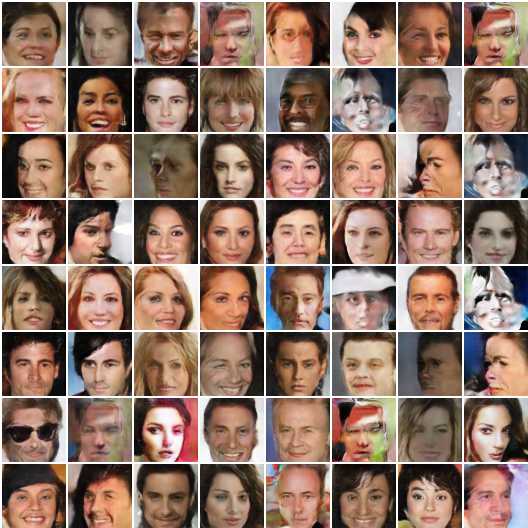}
\caption{Generated samples of Geometric GAN trained for CelebA dataset.}
\label{figure:exp3:celeba}
\end{figure}

\begin{figure}[!hbt]
\centering
\includegraphics[width=10cm]{./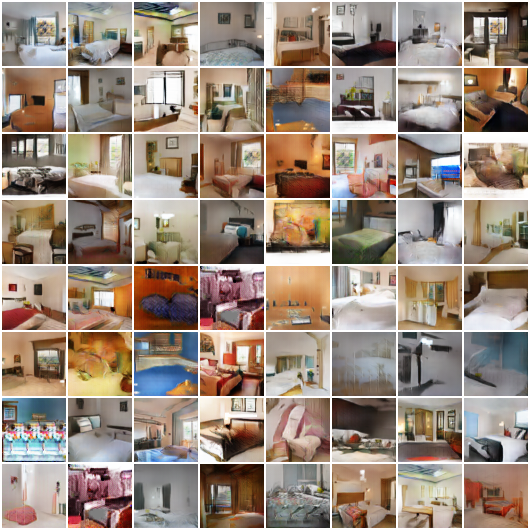}
\caption{Generated samples of Geometric GAN trained for LSUN dataset.}
\label{figure:exp4:lsun}
\end{figure}

\section{Conclusion}
This paper proposed a novel geometric GAN using SVM separating hyperplane, based on geometric intuitions revealed from previous adversarial training approaches. 
The geometric GAN was based on SVM separating hyperplanes that has the maximal margins between the two classes.
Compared to the most of the existing approaches that are based on statistical design criterion, the geometric GAN is derived based on geometric intuition similar to the derivation of SVM.
Extensive numerical experiments showed that the proposed method has demonstrated less mode collapsing and  more stable training behavior.
Moreover, our theoretical results showed that the proposed algorithm converges to the Nash equilibrium between the discriminator and generator, which has also geometric meaning.

\section*{Acknowledgement}
 This work is supported by Korea Science and Engineering Foundation, Grant number NRF2016R1A2B3008104.
 The first author would like to thank 
Yunhun Jang for helpful discussions.



\clearpage

\appendix

\section{Geometric interpretation of GAN  and its variants}
\label{ap:variant}

This appendix provides geometric interpretation  of   GAN and its variants.
In particular, we consider a specific form of the discriminator $D_{w,\zeta}(x_i)$ given by
\begin{eqnarray}\label{eq:Dref}
D_{w,\zeta}(x_i) =  S_f(V_{w,\zeta}(x_i)) ,\quad \mbox{where}\quad V_{w,\zeta}(x_i):=\langle w, \Phi_\zeta(x_i) \rangle 
\end{eqnarray}
where
$S_f$ is an output activation function, 
$V_{w,\zeta}(x_i)$ is the output layer composed of  linear layer  $w$ and the convolutional
neural network below corresponding to $\Phi_\zeta(x)$.
Under this choice of the discriminator, 
we will show that the differences between existing approaches come from the choice of the separating hyperplanes and geometric scaling factors.

\subsection{GAN}

Recall that   the empirical estimate of the GAN cost in \eqref{oGAN} is given by :
$$\hat L_{GAN}(w,\zeta,\theta) :=  \frac{1}{n} \sum_{i=1}^n \log D_{w,\zeta}(x_i)
+\frac{1}{n} \sum_{i=1}^n \log(1-D_{w,\zeta}(g_\theta(z_i)), $$
We now define geometric scaling factors for true and  synthetic (or fake) feature vectors:
$$t_i:=  \frac{S'\left(\langle w, \Phi_\zeta(x_i) \rangle \right)}{D(x_i)}, \quad
\quad s_i:= 
\frac{S'\left(\langle w, \Phi_\zeta(g_\theta(z_i)) \rangle \right)}{1-D(g_\theta(z_i))}$$
In particular, if the activation function is the sigmoid, i.e. $S(u) = 1/(1+e^{-u})$, then we can easily see that
$$t_i:=  {1-D(x_i)}, \quad
\quad s_i:=  D(g_\theta(z_i)).$$

Then, the separating hyperplane update is given by:
\begin{eqnarray}
w^{GAN} & \leftarrow & w^{GAN} + \eta \sum_{i=1}^n \left( t_i\Phi_\zeta(x_i)  -s_i\Phi_\zeta(g_\theta(x_i)) \right)/n
\end{eqnarray}
Using another application  of chain rules,
\begin{eqnarray}
\zeta &\leftarrow& \zeta + \eta  \sum_{i \in I} \langle w^{GAN},  t_i\nabla_\zeta \Phi_\zeta(x_i) - s_i\nabla_\zeta\Phi_\zeta(g_\theta(z_i))  \rangle/n  \label{eq:zetaGAN} \\
\theta &\leftarrow& \theta + \eta  \sum_{i \in I} \langle w^{GAN},  s_i \nabla_\theta\Phi_\zeta(g_\theta(z_i))  \rangle/n \label{eq:thetaGAN}
\end{eqnarray}
Aside from different choice of separating hyperplane, the only difference is that the features vectors needs to be scaled appropriated using geometric scaling parameters.
In fact, the scale parameter is directly related to the geometry of the underlying curved feature spaces due to the $\log(\cdot)$ and nonlinear activations.

From \eqref{eq:thetaGAN}, we can easily see that  as discriminator becomes accurate, we have  $s_i  =  D\left(g_\theta(z_i) \right)\simeq 0$, so the update of the generator  becomes more difficult.
This is the main technical limitation of the GAN training.

\subsection{$f$-GAN}

The $f$-GAN formulation is given by the minmax game of the following empirical cost:
\begin{eqnarray*}
F(w,\zeta,\theta) &=& \frac{1}{n} \sum_{i=1}^n S_f(V_{w,\zeta}(x_i)) - \frac{1}{n} \sum_{i=1}^n f^*(S_f(V_{w,\zeta}(g_\theta(z_i)))
\end{eqnarray*}
where  $f^*$ is the convex conjugate of the divergence function $f$.
We again define  a geometric scale factors for true and fake feature vectors:
$$t_i:= S_f'\left(\langle w, \Phi_\zeta(x_i) \rangle \right), \quad
\quad s_i:= 
(f^*)'\left(S_f(\langle w, \Phi_\zeta(x_i) \rangle )\right)S_f'\left(\langle w, \Phi_\zeta(x_i) \rangle \right). $$
The explicit forms of the geometric scaling factors for different $f$-divergences are` summarized in Table~\ref{tab:ts},

\begin{table}[!hbt]
\begin{center}
\begin{tabular}{c|cccccc}
\hline
Name &  $S_f(v)$  &  $f^*(v)$  & $t_i(u)$ & $s_i(u)$ \\
\hline
Total variation  &  $\frac{1}{2} \mathrm{tanh}(v)$  &  $v$ & $\frac{1}{2} \mathrm{coth}(u)$   & $\frac{1}{2} \mathrm{coth}(u)$   &  \\
Kullback-Leiber (KL) &  $v$  &  $\exp(v-1)$   &  1  & $\exp(u-1)$ \\
Reverse KL & $-\exp(v)$ & $-1-\log(-v)$  & $-\exp(u)$ & $1$   \\
Pearson $\chi^2$  &  $v$  & $v^2/4+v$  &  1  & $u/2+1$\\ 
Jensen-Shannon & $\log(2)-\log (1+\exp(-v))$ & $-\log(2-\exp(v))$ & $\frac{e^{-u}}{1+e^{-u}}$  &   $\frac{1}{1+e^{-u}}$ \\
GAN & $-\log (1+\exp(-v))$ & $-\log(1-\exp(v))$ & $\frac{e^{-u}}{1+e^{-u}}$  &  $\frac{1}{1+e^{-u}}$ \\
\hline
\end{tabular}
\caption{Recommended final layer activation functions for $f$-GAN \cite{nowozin2016f} and their geometric scaling factors.}
\label{tab:ts}
\end{center}
\end{table}

Then, the separating hyperplane update is given by:
\begin{eqnarray}
w^{fGAN} & \leftarrow & w^{fGAN} + \eta \sum_{i=1}^n \left( t_i\Phi_\zeta(x_i)  -s_i\Phi_\zeta(g_\theta(x_i)) \right)/n
\end{eqnarray}
From the chain rules,  we have discriminator and generator update rules:
\begin{eqnarray}
\zeta &\leftarrow& \zeta + \eta  \sum_{i \in I} \langle w^{fGAN},  t_i\nabla_\zeta \Phi_\zeta(x_i) - s_i\nabla_\zeta\Phi_\zeta(g_\theta(z_i))  \rangle/n  \label{eq:zetafGAN} \\
\theta &\leftarrow& \theta + \eta  \sum_{i \in I} \langle w^{fGAN},  s_i \nabla_\theta\Phi_\zeta(g_\theta(z_i))  \rangle/n \label{eq:thetafGAN}
\end{eqnarray}

Note that  $f$-GAN is only different from each other in their construction of the weight coefficient $(t_i,s_i)$ (see Table~\ref{tab:ts}) that
reflects the underlying geometry of the curved feature space.
Other than the total variation-based divergence, the scaling factors are asymmetric. Thus, controlling the balance between discriminator and generator updates
are one of the important technical issues of $f$-GAN training.

\subsection{Wasserstein GAN}

Wasserstein GAN \cite{arjovsky2017wasserstein} minimizes the following IPM:
$$d_\Fc(P,Q) = \sup_{\|f\|_L \leq 1}  \frac{1}{n}\sum_{i=1}^n f(x_i) - \frac{1}{n}  \sum_{i=1}^n f(g_\theta(z_i))$$
where 
$\|f\|_L := \sup\{ |f(x)-f(y)|/\rho(x,y): x \neq y  \in M \}$ is called the Lipschitz seminorm of a real-valued function $f$ on $M$.
Using the discriminator model \eqref{eq:Dref}, the Wasserstein GAN update can be written by:
\begin{eqnarray}\label{eq:Wminmax}
\min_{\theta}\max\limits_{\|w\|_\infty\leq 1,\zeta} &  \left\langle w,  \frac{1}{n} \sum_{i=1}^n \Phi_\zeta(x_i) - \frac{1}{n} \sum_{i=1}^n \Phi_\zeta(g_\theta(z_i)) \right\rangle 
\end{eqnarray}
Therefore, other than  the mean difference on $l_\infty$ ball for the hyperplane normal vector  $w$ update,
the W-GAN update is same as the mean matching GAN update with geometric scaling factor $t_i = s_i=1, \forall i$.

\subsection{Energy-based GAN}

For a given a positive margin $m$, 
the energy-based GAN (EBGAN)  is given by the alternating minimization of the discriminator and generator cost functions \cite{zhao2017ebgan}:
\begin{eqnarray*}
L_D(w,\zeta) &=& \frac{1}{n} \sum_{i=1}^n \left( D_{w,\zeta}(x_i) + \left[m- D_{w,\zeta}(g_\theta(z_i)) \right]_+ \right) \\
L_G(\theta) &=&  \frac{1}{n} \sum_{i=1}^n  D_{w,\zeta}(g_\theta(z_i)) 
\end{eqnarray*}
where  $[x]_+= \max\{0, x\}$. For a function $\psi(y) =a y+b[m-y]_+$ with $y, a,b\geq 0$, its subgradient
is given by:
$$\psi'(y) = \begin{cases} a-b, &  y\in [0,m] \\  a,  &  y \in (m, \infty) \\  [a-b,a], & y \in m \end{cases}$$
Due to the margin,  geometric scale factors for true and fake feature vectors should be defined accordingly.
More specifically, we have 
$$t_i:= S_f'\left(\langle w, \Phi_\zeta(x_i) \rangle \right), \quad
\quad s_i^G:= 
S_f'\left(\langle w, \Phi_\zeta(x_i) \rangle \right),\quad s_i:= \begin{cases} s_i,  &  D_{w,\zeta}(g_\theta(z_i)) \in [0,m]  \\ 0, & \mbox{otherwise}\end{cases}$$
Then, the separating hyperplane update is given by:
\begin{eqnarray}
w^{fGAN} & \leftarrow & w^{fGAN} + \eta \sum_{i=1}^n \left( t_i\Phi_\zeta(x_i)  -s_i\Phi_\zeta(g_\theta(x_i)) \right)/n
\end{eqnarray}
Similarly, we have
\begin{eqnarray}
\zeta &\leftarrow& \zeta + \eta  \sum_{i \in I} \langle w^{fGAN},  t_i\nabla_\zeta \Phi_\zeta(x_i) - s_i\nabla_\zeta\Phi_\zeta(g_\theta(z_i))  \rangle/n \label{eq:EBGANzeta}   \\
\theta &\leftarrow& \theta + \eta  \sum_{i \in I} \langle w^{fGAN},  s_i^G \nabla_\theta\Phi_\zeta(g_\theta(z_i))  \rangle/n  \label{eq:EBGANtheta}
\end{eqnarray}

It is worthy to note that the introduction of {\em margin} appears similar  to our geometric GAN with SVM hyperplane.
In particular,  when a linear activation function is used, we have $t_i = s_i^G=1$, the update equations \eqref{eq:EBGANzeta} and \eqref{eq:EBGANtheta} appears very similar to \eqref{eq:zetasvm}  and
\eqref{eq:thetasvm}, respectively.
However, there exists fundamental differences.
First, in EB-GAN,  only the fake samples outside the margins are  excluded  for the hyperplane and discriminator updates.
On the other hand,  in geometric GAN,  both the true and fake samples outside the margins are excluded for the hyperplane and discriminator updates.
The symmetric exclusion in geometric GAN is observed to make the algorithm more robust to outliers.
Second,  in EBGAN, the margin is defined for the discriminator values. On the other hand,  in geometric GAN,
the margin is determined by the geometric distance between  the feature vectors. 
Therefore, it is much easier to rely on geometric intuition in designing the geometric GAN.

\subsection{Empirical risk minimization}

The empirical risk minimization (ERM) with $l_2$ cost  is one of the standard method for  regression problems.
Although the empirical risk minimization (ERM) is rarely used for generator model,
 our analysis also provides the geometric intuition of ERM update.

Specifically, for a  given mini-batch training data set $S=\{(z_1,x_1),\cdots, (z_n, x_n)\}$,
recall that 
the empirical risk minimization (ERM) in the feature space \cite{scholkopf2002learning} is given by
\begin{eqnarray}\label{eq:ERM}
\min_\theta  \frac{1}{2}  \sum_{i=1}^n \left\|  \Phi_\zeta(x_i)  - \Phi_\zeta(g_\theta(z_i) )\right\|^2
\end{eqnarray}
Then, the stochastic gradient for \eqref{eq:ERM} can be represented in the identical form  to \eqref{eq:theta0}:
\begin{eqnarray}\label{eq:theta}
\zeta &\leftarrow& \zeta + \eta  \sum_{i \in I} \langle w_i^{ERM},  \nabla_\zeta \Phi_\zeta(x_i) - \nabla_\zeta\Phi_\zeta(g_\theta(z_i))  \rangle/n  \label{eq:zetaERM} \\
\theta  &\leftarrow&  \theta  + \eta   \sum_{i=1}^n  \left\langle w_i^{ERM}, \nabla_\theta  \Phi_\zeta(g_\theta(z_i)) \right\rangle/n \label{eq:thetaERM}
\end{eqnarray}
where $w_i^{ERM}$ is now defined as
\begin{eqnarray} \label{eq:w}
w_i^{ERM}  &=&  \Phi_\zeta(x_i) -  \Phi_\zeta(g_\theta(z_i)) 
\end{eqnarray}
which is  dependent on the sample index $i$.
Therefore, aside from the geometric scaling factors, the main difference  comes from the  separating hyperplane for linear classifiers.
More specifically,  the hyperplane for geometric GAN is obtained for   samples within each mini-batch, while
the classifier for  regression is optimally designed  for each pair of samples.

\section{Proof for Theorem~\ref{thm:conv0}}

The proof technique is inspired from that of EB-GAN \cite{zhao2017ebgan}.
We first need the following two lemmas as the extensions of Lemma 1 in \cite{zhao2017ebgan}.

\begin{lemma}\label{lem:ineq}
Let  $\varphi(y) = (m-y) +  [m+y]_+$.  The minimum of $\varphi(y)$ is $2m$ and is
reached at  all $y\geq-m$.
\end{lemma}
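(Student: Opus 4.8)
The plan is to analyze $\varphi(y) = (m-y) + [m+y]_+$ by splitting the real line according to the sign of $m+y$, which is exactly where the hinge $[\,\cdot\,]_+$ switches behaviour.

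First I would treat the case $y \ge -m$. Here $m+y \ge 0$, so $[m+y]_+ = m+y$, and therefore
\[
\varphi(y) = (m-y) + (m+y) = 2m,
\]
identically. Thus $\varphi$ is constant and equal to $2m$ on the whole ray $\{y \ge -m\}$, which in particular gives both the candidate minimum value and the claimed set of minimizers.

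Next I would treat the case $y < -m$. Here $m+y < 0$, so $[m+y]_+ = 0$, and hence $\varphi(y) = m-y$. Since $y < -m$ implies $m - y > m - (-m) = 2m$, we get $\varphi(y) > 2m$ on this region. Combining the two cases shows $\varphi(y) \ge 2m$ everywhere, with equality precisely when $y \ge -m$, which is the assertion.

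I do not expect any real obstacle: the argument is a direct two-case computation once one observes that the $y$-dependence of $(m-y)$ is exactly cancelled by the active hinge. The only point worth stating carefully is that the minimum is \emph{attained} and attained on an entire half-line (rather than at an isolated point), since that degeneracy is what matters for the subsequent Nash-equilibrium argument in Appendix~B; this follows immediately from the first case above.
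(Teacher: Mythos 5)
Your proof is correct and takes essentially the same route as the paper: a two-case split at $y=-m$, with the hinge active on $\{y\ge -m\}$ giving $\varphi\equiv 2m$, and $\varphi(y)=m-y>2m$ on $\{y<-m\}$. Incidentally, your expression $m-y$ on the left region is the correct one (the paper's proof writes $\varphi(y)=-2y$ there, apparently a typo, though it does not affect the conclusion); both arguments implicitly use $m\ge 0$, which holds in the application where $m=1$.
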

\begin{proof}
If $y\geq -m$, then $\varphi(y) = 2m$.
If $y\leq -m$, the $\varphi(y)= -2y$, whose minimum $2m$ is achieved 
at $y=-m$.
\end{proof}

\begin{lemma}\label{lem:minprop}
For given \(\alpha, \beta \geq 0\), The minimum of \(\varphi(y)=\alpha \left[ m - y \right]_{+} + \beta \left[ m + y \right]_{+}\) exist if \(y \in \left[-m, m\right]\). More specifically, the minimum of \(\varphi(y)\) is \(2 \beta m\) at \(y=m\) if \(\alpha > \beta\), or \(2 \alpha m\) at \(y=-m\) if \(\alpha \leq \beta\).
\end{lemma}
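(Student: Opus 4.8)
The plan is to exploit that $\varphi$ is a continuous piecewise-linear function of $y$ with breakpoints only at $y=-m$ and $y=m$, so its minimum over $\mathbb{R}$ is attained at one of these breakpoints (or, in degenerate cases, on a whole segment between them). First I would split $\mathbb{R}$ into the three intervals $(-\infty,-m]$, $[-m,m]$, and $[m,\infty)$ and resolve the positive parts on each. On $[m,\infty)$ one has $[m-y]_+=0$ and $[m+y]_+=m+y$, so $\varphi(y)=\beta(m+y)$, which is nondecreasing and attains its minimum $2\beta m$ at $y=m$; symmetrically, on $(-\infty,-m]$ one has $\varphi(y)=\alpha(m-y)$, nonincreasing, with minimum $2\alpha m$ at $y=-m$; and on $[-m,m]$ both positive parts are active, so $\varphi(y)=\alpha(m-y)+\beta(m+y)=(\alpha+\beta)m+(\beta-\alpha)y$, an affine function of slope $\beta-\alpha$.

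Next I would compare the three branch minima. If $\alpha>\beta$, the middle branch is strictly decreasing, hence on $[-m,m]$ its minimum is $2\beta m$ at $y=m$; the left branch contributes $2\alpha m>2\beta m$ and the right branch contributes $2\beta m$, so the global minimum equals $2\beta m$, attained at $y=m\in[-m,m]$. If instead $\alpha\le\beta$, the middle branch is nondecreasing, so its minimum on $[-m,m]$ is $2\alpha m$ at $y=-m$; the right branch contributes $2\beta m\ge 2\alpha m$ and the left branch contributes $2\alpha m$, so the global minimum equals $2\alpha m$, attained at $y=-m\in[-m,m]$. In particular the minimizer always lies in $[-m,m]$, which is the first assertion of the lemma, and the explicit value and location match the stated dichotomy.

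There is no real obstacle here beyond bookkeeping the three intervals and their endpoints; the only points needing a word of care are the degenerate cases. When $\alpha=\beta$, $\varphi$ is constant equal to $2\alpha m$ on all of $[-m,m]$, and when $\alpha=0$ or $\beta=0$ one of the outer branches is flat, but in every such case the claimed minimal value and the claimed minimizer still satisfy the statement, since it only asserts that the minimum is attained at $y=m$ (resp.\ $y=-m$), not that it is the unique minimizer. Finally I would note that this lemma is the natural companion of Lemma~\ref{lem:ineq}: taking $m=1$ and letting $(\alpha,\beta)$ play the role of the pointwise densities $p_x(x)$ and $p_{g_\theta}(x)$, it pins down the pointwise minimum of the integrand in the large-$n$ soft-margin SVM cost $R(D,g)$, which is exactly the ingredient needed to drive the proof of Theorem~\ref{thm:conv0}.
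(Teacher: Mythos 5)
Your proof is correct and follows essentially the same route as the paper's: split $\mathbb{R}$ at the breakpoints $y=\pm m$, resolve the positive parts on each of the three intervals, observe that the middle branch is affine with slope $\beta-\alpha$, and compare the endpoint values. Your explicit comparison of the three branch minima and your remarks on the degenerate cases $\alpha=\beta$ are slightly more careful than the paper's version, but the argument is the same.
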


\begin{proof}
If \( y \geq m \), then \( \varphi(y) = \beta \left[ m + y \right]_{+} = \beta (m + y) \geq 2 \beta m \). Thus, \( \inf_{y \in [m,\infty)} {\varphi(y)} = 2 \beta m \) at \( y = m \). Similarly, if \( y \leq -m \), then \( \inf_{y \in (-\infty, -m]} {\varphi(y)} = 2 \alpha m \) at \( y = -m \). 
For \( y \in [-m, m] \), \( \varphi(y) = \alpha (m-y) + \beta (m+y) = (\alpha + \beta)m + (\beta - \alpha)y \). If \( \alpha > \beta \), \( \inf_{y \in [-m,m]} {\varphi(y)} = 2 \beta m \) at \( y = m \) since \( \varphi (y) \) is a decreasing function on \(y \in [-m, m]\). Similarly, if \( \alpha \leq \beta \), \( \inf_{y \in [-m,m]} {\varphi(y)} = 2 \alpha m \) at \( y=-m \) since \( \varphi (y) \) is increasing at \(y \in [-m, m]\).
\end{proof}

Now we are ready for the proof.
\begin{proof}[Proof of Theorem~\ref{thm:conv0}]
Since $R(D,g)$ and $L(D,g)$ are lower semi-continuous functions,  \( R(D^*, g^*) \) has a finite value for the optimal solution \( (D^*, g^*) \).
Moreover, due to the alternating minimization,
 the pair satisfies:
\begin{eqnarray}
R(D^*, g^*) \leq R(D, g^*),\quad \forall D \label{eq:Ropt} \\
L(D^*, g^*) \leq L(D^*, g),\quad \forall g \label{eq:Dopt}
\end{eqnarray}
First, define the set $$A = \{x | p_{x}(x) \leq p_{g^*}(x) \}$$
and  observe that
%
\begin{eqnarray}
    R(D,g^*) & =&\int {p_{x}(x)     \big[ 1 - D_{w,b,\zeta}(x) \big]_{+}  +  p_{g^*_\theta}(x) \big[ 1 + D_{w,b,\zeta}(x) \big]_{+}  dx} \notag \\
    & = &      \int {\mathbbm{1}_A(x) p_{x}(x)     \big[ 1 - D_{w,b,\zeta}(x) \big]_{+} dx} \notag \\
        & & + \int {\mathbbm{1}_A(x) p_{g^*_\theta}(x) \big[ 1 + D_{w,b,\zeta}(x) \big]_{+} dx} \notag \\
    &  & + \int {\mathbbm{1}_{A^c}(x)    p_{x}(x)     \big[ 1 - D_{w,b,\zeta}(x) \big]_{+} dx} \notag\\
        & & + \int {\mathbbm{1}_{A^c}(x)   p_{g^*_\theta}(x) \big[ 1 + D_{w,b,\zeta}(x) \big]_{+} dx} \notag
\end{eqnarray}
where  $A^c$ denotes the complementary set and $\mathbbm{1}_A(x)$ is an indicator function 
$$ \mathbbm{1}_A(x)= \begin{cases} 1, & x\in A \\ 0, & x \notin A \end{cases}. $$

From Lemma~\ref{lem:minprop}, we know that 1)  when $p_x(x) < p_{g^*_\theta}(x)$, the term within the integral
achieves its minimum value of $2 p_x(x)$ at $D^*(x)=-1$, or 2) when $p_x(x) > p_{g^*_\theta}(x)$, the term within the integral
achieves its minimum value of $2p_{g^*_\theta}(x)$ at $D^*(x)=1$. Therefore,
\begin{eqnarray}
    R(D^*,g^*)
    & =&       2 \int {\mathbbm{1}_A(x) p_{x}(x)     dx} + 2 \int {\mathbbm{1}_{A^c}(x)   p_{g^*_\theta}(x) dx} \notag \\
    & =&       2 \int { \big[ \mathbbm{1}_A(x)p_{x}(x) + \big(1-\mathbbm{1}_A(x)\big) p_{g^*_\theta}(x) \big] dx} \notag \\
    & = &      2 + 2 \int {\mathbbm{1}_A(x) \big( p_{x}(x) - p_{g^*_\theta}(x) \big) dx} \notag \\
    & \leq & 2  \label{eq:upper}
\end{eqnarray}
where the last inequality comes from
$p_{x}(x) - p_{g_\theta}(x) \leq 0$ for all $x \in A$.

Second, we will show that $R(D^*, g^*) \geq 2$.
Because \eqref{eq:Dopt} holds for arbitrary pdf $p(x)$, we have
\begin{eqnarray*}
\int p_{g_\theta^*}(x)\left(-D^*(x)\right) dx &\leq &  \int p_{x}(x)\left(-D^*(x)\right) dx
\end{eqnarray*}
By adding 1 to both sides, we have
\begin{eqnarray*}
\int p_{g_\theta^*}(x)\left(1-D^*(x)\right) dx &\leq &  \int p_{x}(x)\left(1-D^*(x)\right) dx \\
&\leq&  \int p_{x}(x)\left[1-D^*(x)\right]_+ dx \
\end{eqnarray*}
where the last inequality  comes from $x \leq [x]_+=\max\{0,x\}$.
%
Now, by adding \( \int {p_{g_\theta^*}(x)\big[1+D^*(x)\big]_{+}} dx \) on both sides, we have:
\begin{eqnarray}
& \int {p_{g_\theta^*}(x)\big(1-D^*(x)\big)} dx + \int {p_{g_\theta^*}(x)\big[1+D^*(x)\big]_{+}} dx \notag\\
& \quad \quad \quad \leq \int {p_{x}(x)\big[1-D^*(x)\big]_{+}} dx + \int {p_{g_\theta^*}(x)\big[1+D^*(x)\big]_{+}} dx = R(D^*, g^*)
\end{eqnarray}

From Lemma~\ref{lem:ineq}, we know that $\big(1-D^*(x)\big)+\big[1+D^*(x)\big]_{+} \geq 2$. Thus, 
we have
\begin{eqnarray*}
R(D^*, g^*) &\geq &  \int {p_{g_\theta^*}(x)\big(1-D^*(x)\big)} dx + \int {p_{g_\theta^*}(x)\big[1+D^*(x)\big]_{+}} dx\\
&\geq &  \int p_{g_\theta^*}(x) 2 dx 
\quad \geq \quad  2
\end{eqnarray*}
Thus, 
\begin{eqnarray}
    2 \leq R(D^*, g^*) \leq 2
\end{eqnarray}

Finally, the equality in \eqref{eq:upper} holds if and only if 
\begin{align*}
  \int { \mathbbm{1}_{A}(x) \big( p_{x}(x) - p_{g^*_\theta}(x) \big) dx} = 0 
\end{align*}
The above equalities hold if and only if \( p_{g*}(x) = p_{data}(x) \) almost everywhere \cite{zhao2017ebgan}. 
This concludes the proof.

\end{proof}

\end{document}